\definecolor{pink}{rgb}{0.58,0,0.83}
\definecolor{orange}{rgb}{1,0.5,0}
\definecolor{lightgreen}{rgb}{0.2, 0.8, 0.2}
\definecolor{lightyellow}{rgb}{0.84, 0.65, 0.13}
\newtheorem{theorem}{Theorem}
\newtheorem{problem}{Problem}
\newtheorem{assumption}{Assumption}
\newtheorem{definition}{Definition} 
\newtheorem{remark}{Remark}
\definecolor{ForestGreen}{RGB}{34,139,34}
\def\BibTeX{{\rm B\kern-.05em{\sc i\kern-.025em b}\kern-.08em
		T\kern-.1667em\lower.7ex\hbox{E}\kern-.125emX}}
\begin{document}
	\title{\bf Versatile Distributed Maneuvering with Generalized Formations using Guiding Vector Fields}
	%\begin{comment}
	\author{
		Yang Lu$^{1,*}$, Sha Luo$^{2,*}$, Pengming Zhu$^{1}$, Weijia Yao$^{3}$, Héctor García de Marina$^{4}$, Xinglong Zhang$^{1}$, Xin Xu$^{1}$
      \thanks{$^{1}$Yang Lu, Pengming Zhu, Xinglong Zhang, and Xin Xu are with the College of Intelligence Science and Technology, National University of Defense Technology, Changsha, 410073, China. $^{2}$Sha Luo is with the College of Information Science and Engineering, Hunan Normal University, Changsha, 410081, China. $^{3}$Weijia Yao is with the School of Robotics, Hunan University, Changsha, 410082, China. $^{4}$Héctor García de Marina is with the Department of Computer Engineering, Automation and Robotics, and with CITIC, Universidad de Granada, 18071 Granada, Spain. Corresponding authors: Weijia Yao, Xin Xu. (e-mail: {\tt\small weijia.yao.new@outlook.com}, {\tt\small xinxu@nudt.edu.cn})}
      \thanks{$^*$Yang Lu and Sha Luo contributed equally to this work.  }}
		%\thanks{$^{1}$Yang Lu, Pengming Zhu, and Xin Xu are with the College of Intelligence Science and Technology, National University of Defense Technology, China (email: {luyang18@mail.sdu.edu.cn,} {zhupengming@nudt.edu.cn} { xuxin$\_$mail@263.net}). \emph{(Corresponding author: Xin Xu.)}}
		%
		%\thanks{$^{2} $Sha Luo is with the College of Information Science and Engineering, Hunan Normal University, China {(email: sha.luo@outlook.com)}.}
		%\thanks{$^{3} $Weijia Yao is with the School of Robotics, Hunan University, China
			%{(email: weijia.yao.new@outlook.com)}.}
   %\thanks{$^{4}$ Héctor García de Marina is with the Department of Computer Engineering, Automation and Robotics, and with CITIC, Universidad de Granada, 18071 Granada, Spain {(email: hgdemarina@gmail.com)}.}
		%\thanks{$^*$Yang Lu and Sha Luo contributed equally to this work.
%		}
%	}
	%\end{comment}
	
	\maketitle
	\thispagestyle{empty}
	\pagestyle{empty}
	
	\begin{abstract}
        This paper presents a unified approach to realize versatile  distributed maneuvering with generalized formations. Specifically, we decompose the robots' maneuvers into two independent components, i.e., interception and enclosing, which are parameterized by two independent virtual coordinates. Treating these two virtual coordinates as dimensions of an abstract manifold, we derive the corresponding singularity-free guiding vector field (GVF), which, along with a distributed coordination mechanism based on the consensus theory, guides robots to achieve various motions (i.e., versatile maneuvering), including (a) formation tracking, (b) target enclosing, and (c) circumnavigation. Additional motion parameters can generate more complex cooperative robot motions. Based on GVFs, we design a controller for a nonholonomic robot model. Besides the theoretical results, extensive simulations and experiments are performed to validate the effectiveness of the approach.    
	\end{abstract}
	
	%%\markboth{IEEE TRANSACTIONS ON INDUSTRIAL ELECTRONICS}%
	%{}
	
	\definecolor{limegreen}{rgb}{0.2, 0.8, 0.2}
	\definecolor{forestgreen}{rgb}{0.13, 0.55, 0.13}
	\definecolor{greenhtml}{rgb}{0.0, 0.5, 0.0}
	
	\section{Introduction}
 Distributed coordinated maneuvering of multiple mobile robots plays a critical role in tasks such as environmental monitoring \cite{notomista2022multi}, collaborative transportation \cite{kabir2021efficient}, and target tracking \cite{corah2021scalable}. Various algorithms have been designed to achieve specific forms of robot coordinated maneuvering \cite{rosenfelder2021cooperative, ju2022mpc, ghommam2022design}, but a unified framework for realizing different forms of distributed robot maneuvering with generalized formations—referred to as \emph{versatile maneuvering}—has not been developed. These maneuvers include but are not limited to, formation tracking, (non-orbiting) target enclosing, and (orbiting) circumnavigation. In this work, we propose a unified framework based on guiding vector fields (GVFs) to achieve these maneuvers. Specifically, our approach combines different cooperative behaviors, such as interception and enclosing, each of which can be parameterized as a one-dimensional manifold (i.e., a desired path). The Cartesian product of these 1D manifolds creates a high-dimensional manifold  (for simplicity, we consider the two-dimensional case), which is used to derive a high-dimensional singularity-free GVF with virtual coordinates. Robots can achieve versatile distributed maneuvering by reaching a distributed consensus on these virtual coordinates. The guiding information from the GVFs is then employed to derive controllers for unicycle robots.
	%Distributed coordinated maneuvering of multiple mobile robots is essential in executing environmental monitoring\cite{notomista2022multi}, collaborative transportation\cite{kabir2021efficient}, target tracking\cite{corah2021scalable}, etc. Many algorithms have been designed to achieve one specific form of robot coordinated maneuvering\cite{rosenfelder2021cooperative,ju2022mpc,ghommam2022design}, while there does not exist a unified framework to realize various distributed robot maneuvering with generalized formations (i.e., \emph{versatile maneuvering}), including but not limited to formation tracking, (non-orbiting) target enclosing, and (orbiting) circumnavigation. To fill this gap, we propose a unified framework based on guiding vector fields (GVFs). Specifically, our approach achieves these maneuvers by combining different cooperative behaviors; e.g., interception and enclosing, and each of them can be parameterized as a one-dimensional manifold (i.e., a desired path). \textcolor{blue}{The Cartesian product of the 1D manifolds creates a two-dimensional manifold, which is used to derive the high-dimensional singularity-free GVF with virtual coordinates/parameters.} In reaching distributed consensus (with desired parametric differences) on the virtual coordinates/parameters, robots can realize versatile  distributed maneuvering. The guiding information from guiding vector fields is utilized to derive controllers for unicycle robots. % Besides the theoretical results, extensive simulation and experimental results validate the effectiveness of the proposed approach.
	
	\textbf{Related Work:} 
Distributed maneuvering control, including formation tracking and target enclosing, has been extensively studied \cite{li2012distributed, wang2017distributed, de2016distributed, xu2020moving, xu2023dual, hu2021bearing, shao2022distributed}. Formation tracking control ensures that each agent moves in coordination with others to preserve the desired formation shape while enclosing control enables a group of robots to surround a specific target \cite{de2016distributed}. Recent studies have focused on aspects such as ensuring even distribution of robots around the target \cite{xu2020moving}, forming formations for heterogeneous multi-robot systems \cite{xu2023dual}, estimating target positions \cite{hu2021bearing}, and dealing with uncertainties \cite{shao2022distributed}. Although significant progress has been made, a unified, robust, and scalable framework for adaptive and efficient distributed maneuvering control remains a challenge.

 In contrast to enclosing control, circumnavigation control drives robots to follow a predefined circular orbit around a target. Existing studies on circumnavigation focus on a variety of challenges, such as navigating moving 3D targets \cite{sinha20223}, accommodating robots with nonlinear dynamics \cite{yu2022optimal, luo2024optimal}, avoiding obstacles \cite{wang2024target, jiang2023safety}, and addressing communication-denied or GPS-denied situations \cite{yan2024distributed, zou2023circumnavigation, liu2023moving, wang2021mobile}. However, most of these methods focus on circular orbits, and the generalization to arbitrary orbits has not been fully explored. Furthermore, most studies restrict robots to moving along a single circular orbit around a target, with few addressing the possibility of robots circumnavigating multiple different orbits in a distributed manner.

Guiding vector fields (GVFs) have emerged as a powerful tool for distributed motion control in multi-robot systems \cite{frew2008coordinated, de2017circular, nakai2013vector, pimenta2013decentralized, 9969449}. For example, Hector et al. \cite{de2017circular} proposed a GVF-based circular formation control approach for multiple unmanned aerial vehicles (UAVs) to track parameterized circular paths. In \cite{9969449}, virtual coordinates are introduced to derive singularity-free GVFs, enabling robots to navigate surfaces and perform complex motion tasks. The robots' behaviors are coordinated through consensus theory \cite{mesbahi2010graph}. These methods have demonstrated computational efficiency, robustness, and scalability in coordinated motion control tasks.

Despite these successes, existing studies exploring vector fields in orbiting control tasks remain limited. For example, in \cite{zhong2019circumnavigation}, an orthogonal vector field method is used to guide networked robots to circumnavigate a dynamic target in 3D. However, it is constrained by a fixed orientation for planar motion, limiting its adaptability to complex trajectories. In \cite{wilhelm2019circumnavigation}, UAVs are guided by vector fields to circumnavigate a moving target while avoiding obstacles, but the approach is restricted to circular orbits of fixed radii. Similarly, \cite{muslimov2022application} presents a decentralized GVF-guided genetic approach for 2D trajectory optimization, but the orbits are again limited to constant radii, restricting its applicability to more dynamic environments.

	\textbf{Contributions:} We propose a unified framework based on guiding vector fields to realize versatile distributed maneuvering with generalized formations, including formation tracking, target enclosing/surrounding, and circumnavigation, in $\mathbb{R}^2$ and $\mathbb{R}^3$. We generalize the maneuvering orbits to (possibly multiple) arbitrary closed curves while an arbitrary number of robots' motion is coordinated via low communication bandwidth (every two robots only transmit two virtual scalar coordinates) without a centralized node,  broadening the real-world applications. Different from typical estimation techniques, our approach estimates the target's speed through the coordination around the target and the robot's actual velocities. %We note that the proposed approach can be even generalized to a combination of more behaviors without resulting in singularities in the guiding vector fields. 
    Moreover, based on the GVFs, we design a controller algorithm for a nonholonomic robot model. Extensive simulation and experimental results demonstrate the effectiveness of the framework. %The approach has the following benefits:  1) it is real-time since only the virtual coordinates' information is communicated, thereby operating at low-bandwidth links; 2) it makes robots maneuver on arbitrary parametric 2D manifolds, which can be naturally extended to higher dimensional manifolds; 3) it is scalable and the multi-robot system can grow larger; 4) it has a theoretical guarantee of convergence.
	
	\textbf{Notations:} The notation $\mathbb{Z}_i^j$ denotes the integer set $\{m\in\mathbb{Z}:i\le m\le j\}$. %A vector in $\mathbb{R}^n$ is denoted using the boldface font, e.g., $\boldsymbol{v}$ and its $j$-th (scalar) element is ${v}_j$. The Euclidean norm of $\boldsymbol{v}$ is denoted by $\lVert\boldsymbol{v}\rVert$. 
 Symbol $\wedge(\cdot)$ denotes the wedge product operation\cite[Eq.5]{yao2021singularity}. Given $N$ robots, the $j$-th coordinate of the $i$-th robot is denoted by $x_j^{[i]} \in \mathbb{R}$, where $i\in\mathbb{Z}_1^N$. Namely, the superscript $[i]$ denotes the ID of a robot while the subscript $j$ denotes the $j$-th entry of a related vector quantity (e.g., position). %Superscripts $\boldsymbol{\mathrm{sf}}$, $\boldsymbol{\mathrm{cr}}$, and $\boldsymbol{\mathrm{vm}}$ of $\boldsymbol{{}^{\mathrm{sf}} \chi}$, $\boldsymbol{{}^{\mathrm{cr}} \chi}$, and $\boldsymbol{{}^{\mathrm{vm}} \chi}$ are employed to denote the three vector fields, specifically xx, cc, and dd. We use left superscript to distinguish the vector field, such as $\boldsymbol{{}^{\mathrm{sf}} \chi}$, $\boldsymbol{{}^{\mathrm{cr}} \chi}$, and $\boldsymbol{{}^{\mathrm{vm}} \chi}$.
	%\textbf{Graphs:}
	The interaction topology of $N$ robots is represented by an \emph{undirected graph} $\mathcal{G}=(\mathcal{V},\mathcal{E})$, where the vertex set $\mathcal{V}=\{1,\cdots,N\}$ and the edge set $\mathcal{E}\subseteq\mathcal{V}\times\mathcal{V}$. The set of neighbor agents of the $i$-th robot is denoted by $\mathcal{N}_i:=\{j\in\mathcal{V}:(i,j)\in\mathcal{E}\}$. See \cite{mesbahi2010graph} for more details.
	
	\section{Preliminaries}\label{PreliminariesAndProblemFormulation}
	This section presents the preliminaries on distributed guiding vector field (DGVF) theory for multi-robot surface navigation, which was proposed in \cite{9969449} initially for multiple robots to navigate on a general two-dimensional manifold (i.e., surface) with motion coordination (e.g., formation maneuvering on a torus). The theory starts with a parametric description of a surface for robots to maneuver on. Precisely, the surface $\mathcal{S}^{[i]} \subseteq \mathbb{R}^n$ for the $i$-th robot is described by\footnote{For precision, the notation includes superscripts, but they can be ignored during reading as they only indicate which robot the variable corresponds to.}
	\begin{equation}\label{para_sur_P}
		x_1^{[i]} = h_1^{[i]}(s_1^{[i]},s_2^{[i]}), \cdots, x_n^{[i]}=h_n^{[i]}(s_1^{[i]},s_2^{[i]}),
	\end{equation}
	where $i\in\mathbb{Z}_1^N$, $x_j^{[i]}$ is its $j$-th coordinate, $s_1^{[i]},s_2^{[i]}\in\mathbb{R}$ are two parameters of the surface, and $h_j^{[i]}:\mathbb{R}^2 \to \mathbb{R}$ is the twice continuously differentiable parametric function for the surface. For $i\in\mathbb{Z}_1^N$, we introduce variables $w_1^{[i]}$ and $w_2^{[i]}\in\mathbb{R}$ to replace $s_1^{[i]}$ and $s_2^{[i]}$, respectively, and then the $i$-th robot's generalized coordinate is written as 
	$$\boldsymbol{\xi^{[i]}}:=(x_1^{[i]},\cdots,x_n^{[i]},w_1^{[i]},w_2^{[i]})\in\mathbb{R}^{n+2}$$
    %
	% and the desired parametric surface $\mathcal{S}^{[i]} \subseteq \mathbb{R}^{n+2}$ for the $i$-th robot is $$\mathcal{S}^{[i]}=\{\boldsymbol{\xi^{[i]}}\in\mathbb{R}^{n+2}:\phi_1^{[i]}(\boldsymbol{\xi^{[i]}})=0, \dots, \phi_n^{[i]}(\boldsymbol{\xi^{[i]}})=0\},$$ where $\phi_j^{[i]}(\boldsymbol{\xi^{[i]}}) := x_j^{[i]}-f_j^{[i]}(w_1^{[i]},w_2^{[i]})$ for $j \in \mathbb{Z}_1^n$.
   and $\phi_j^{[i]}(\boldsymbol{\xi^{[i]}}) := x_j^{[i]}-h_j^{[i]}(w_1^{[i]},w_2^{[i]})$ for $j \in \mathbb{Z}_1^n$. Then according to \cite{yao2021singularity}, the high dimensional \emph{surface-navigation vector field} $\boldsymbol{{ }^{\mathrm{sf}} \chi^{[i]}(\xi^{[i]})}$ is obtained by
	\begin{equation}
		\begin{aligned}
			\boldsymbol{{ }^{\mathrm{sf}} \chi^{[i]}(\xi^{[i]})}=&\wedge(\boldsymbol{\nabla\phi_1^{[i]}}(\boldsymbol{\xi^{[i]}}),\cdots,\boldsymbol{\nabla\phi_n^{[i]}}(\boldsymbol{\xi^{[i]}}))\\&-\sum_{j=1}^{n} k_j\phi_j^{[i]}(\boldsymbol{\xi^{[i]}})\boldsymbol{\nabla\phi_j^{[i]}}(\boldsymbol{\xi^{[i]}}),
		\end{aligned}
	\end{equation}
	where $k_j>0$ is the coefficient, and $\nabla\boldsymbol{\phi_j^{[i]}}$ is the gradient of $\phi_j^{[i]}$ with respect to the generalized coordinate $\boldsymbol{\xi^{[i]}}$.
	By adopting the consensus control algorithm\cite{ren2008distributed}, the motion coordinating part is given as below:
	\begin{equation}\label{CR_sur}
		\begin{aligned}
			{}^{\mathbf{cr}}\boldsymbol{\chi_1^{[i]}}(\boldsymbol{w^{[\cdot]}_1})=(0,\cdots,0,c_1^{[i]}(\boldsymbol{w_1^{[\cdot]}}),0)^\top,\\
			{}^{\mathbf{cr}}\boldsymbol{\chi_2^{[i]}}( \boldsymbol{w^{[\cdot]}_2})=(0,\cdots,0,0,c_2^{[i]}(\boldsymbol{w_2^{[\cdot]}}))^\top.
		\end{aligned}
	\end{equation}
 In this equation, $c_1^{[i]}(\boldsymbol{w_1^{[\cdot]}})$ and $c_2^{[i]}(\boldsymbol{w_2^{[\cdot]}})$ are formulated as follows:
	\begin{equation}\label{motion_coordination_parts_sur}
		\begin{array}{c}
			\begin{aligned}
				c_1^{[i]}(t,\boldsymbol{w^{[\cdot]}_1})&=-\sum_{j \in \mathcal{N}_i}(w_1^{[i]}(t)-w_1^{[j]}(t)-\Delta_1^{[i, j]}
				),\\
				c_2^{[i]}(t,\boldsymbol{w^{[\cdot]}_2})&=-\sum_{j \in \mathcal{N}_i}(w_2^{[i]}(t)-w_2^{[j]}(t)-\Delta_2^{[i, j]}
				),
			\end{aligned}
		\end{array}
	\end{equation}
	where $(i,j)\in\mathcal{E}, \Delta_1^{[i,j]},\Delta_2^{[i,j]}$ are the desired parametric differences between the virtual coordinates of the $i$-th and $j$-th robots. By stacking $\Delta_1^{[i,j]},\Delta_2^{[i,j]}$ as vectors, we define $\boldsymbol{\Delta_1^*}, \boldsymbol{\Delta_2^*} \in \mathbb{R}^{|\mathcal{E}|}$, respectively. 
	We write Eq.~\eqref{motion_coordination_parts_sur} compactly and have 
	\begin{equation*}
		\begin{aligned}
			\boldsymbol{c_1^{[\cdot]}}(\boldsymbol{w_1^{[\cdot]}})=-L(\boldsymbol{w_1^{[\cdot]}}-\boldsymbol{w_1^{*}})=-L\boldsymbol{\tilde{w}_1^{[\cdot]}},\\
			\boldsymbol{c_2^{[\cdot]}}(\boldsymbol{w_2^{[\cdot]}})=-L(\boldsymbol{w_2^{[\cdot]}}-\boldsymbol{w_2^{*}})=-L\boldsymbol{\tilde{w}_2^{[\cdot]}},
		\end{aligned}
	\end{equation*}
	where $\boldsymbol{c_1^{[\cdot]}}(\boldsymbol{w_1^{[\cdot]}})=(c_1^{[1]}(\boldsymbol{w_1^{[\cdot]}}),\cdots,c_1^{[N]}(\boldsymbol{w_1^{[\cdot]}}))$, $\boldsymbol{c_2^{[\cdot]}}(\boldsymbol{w_2^{[\cdot]}})=(c_2^{[1]}(\boldsymbol{w_2^{[\cdot]}}),\cdots,c_2^{[N]}(\boldsymbol{w_2^{[\cdot]}}))$, $\boldsymbol{w_1^*}=({w_1^{[1]}}^*,\cdots,{w_1^{[N]}}^*)$ and $\boldsymbol{w_2^*}=({w_2^{[1]}}^*,\cdots,{w_2^{[N]}}^*)$ are reference configurations for designing the desired parameteric differences, $L=L(\mathcal{G})$ is the Laplacian
	matrix encoding the communication topology among robots, and $\boldsymbol{\tilde{w}_1^{[\cdot]}}=\boldsymbol{w_1^{[\cdot]}}-\boldsymbol{w_1^{*}}$, $\boldsymbol{\tilde{w}_2^{[\cdot]}}=\boldsymbol{w_2^{[\cdot]}}-\boldsymbol{w_2^{*}}$.

	Finally, the DGVF is obtained by a weighted sum of \emph{surface-navigation} and \emph{coordination} parts\cite{9969449}, i.e.,
	\begin{equation}\label{GVF_sur}
		\begin{aligned}
			\boldsymbol{\mathfrak{X}^{[i]}}(\boldsymbol{\mathcal{\xi}^{[i]}})=\boldsymbol{{}^{\mathrm{sf}} \chi^{[i]}}(\boldsymbol{\xi^{[i]}})+k_{c1}{}^{\mathbf{cr}}\boldsymbol {\chi_1^{[i]}}( \boldsymbol{w_1^{[\cdot]}})+k_{c2}{}^{\mathbf{cr}}\boldsymbol {\chi_2^{[i]}}(\boldsymbol{w_2^{[\cdot]}}),
		\end{aligned}
	\end{equation}	
	where $k_{c1},k_{c2}>0$ are coefficients. The above DGVF guides a multi-robot system to achieve motion coordination on the desired parametric surface $\mathcal{S}^{[i]}$.
	
	In this paper, we present a novel perspective on this theory and its rigorous underpinning: \emph{rather than building the theory from a concrete, specific surface $\mathcal{S}^{[i]}$, we use the two variables $w_1$ and $w_2$ to describe two behaviors}; e.g., target interception and target enclosing, leading to versatile and distributed maneuvering (these two behaviors do result in a new surface, albeit abstract). Following this perspective, one can further include more behaviors and generate a manifold of higher dimensions, and it is straightforward to develop the corresponding DGVF \emph{without} singularities and distributed algorithms for accomplishing these tasks/behaviors.  In the sequel, we will elaborate on the two-variable case for simplicity.
    
	\section{Design of Guiding Vector Fields for Multi-Robot Systems}\label{distributed_leader_following_GVF}
	This section first introduces the definition of the combined behavior, illustrating examples with different values of desired virtual coordinates in TABLE~\ref{illustration_table}, and then formulates the problem of versatile maneuvering. A DGVF is designed, and theoretical analyses are conducted to address the problem.
 
 	\begin{figure}[!htbp]
		\centering\includegraphics[width=3.5in]{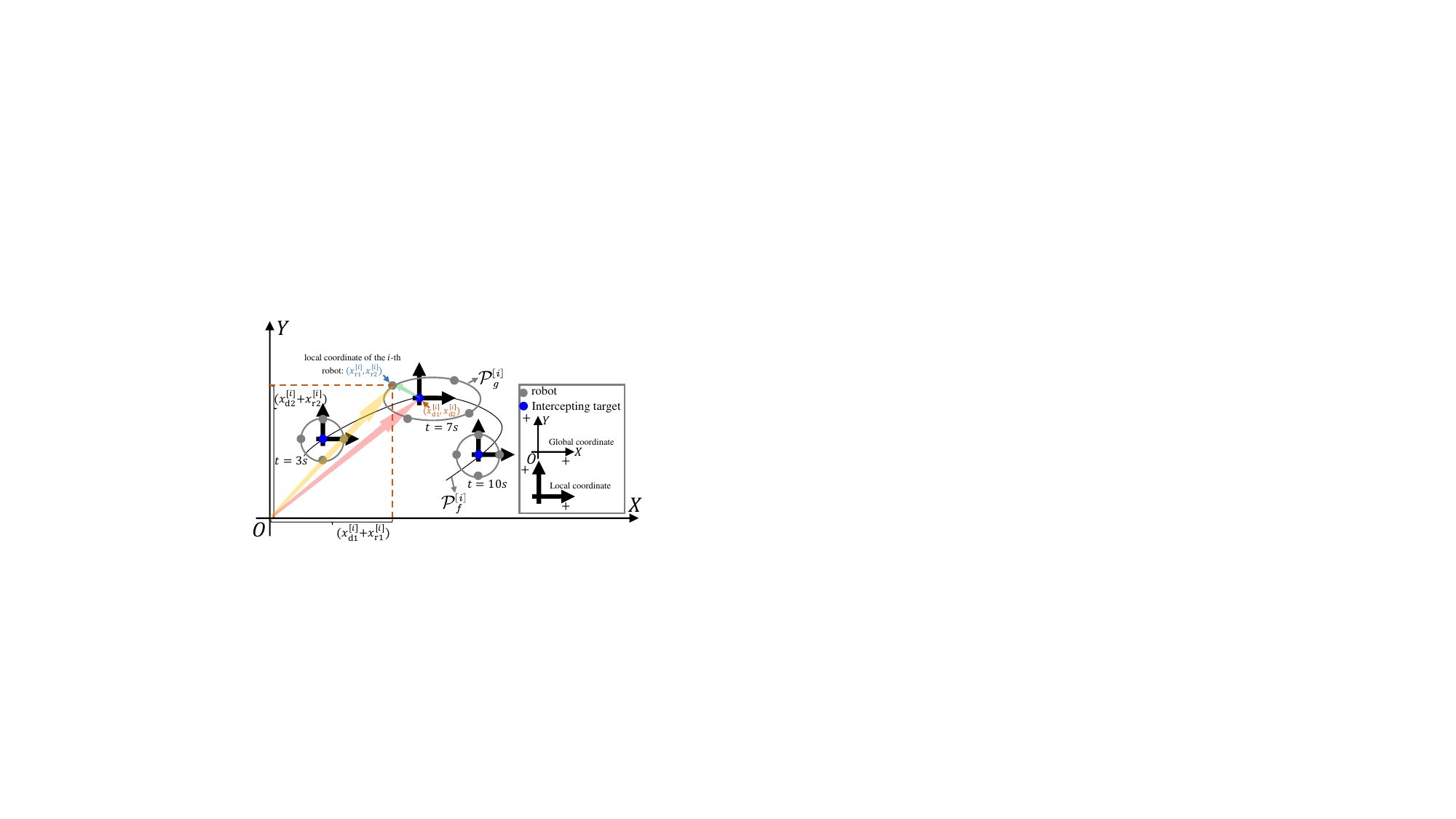}
		\caption{An illustration of versatile maneuvering: taking the $i$-th robot's at $t=7\mathrm{s}$ as an example. The red and green arrows depict the interception and enclosing behaviors, respectively, and the yellow arrow denotes the combined result for realizing versatile maneuvering.}
		\label{fig_illustration}
	\end{figure}
	\subsection{Fundamental definition and problem formulation}
	Suppose that $N$ robots are required to  achieve versatile and distributed maneuvers such as formation tracking, target enclosing (i.e., surrounding), or circumnavigation. We decompose these maneuvers into two independent behaviors; i.e., interception and enclosing. Therefore, this paper first presents the definition of the combined intercepting and enclosing behavior and then formulates the problem.
	\subsubsection{Definition of the combined behavior}
	\emph{Regarding the interception behavior}, we perceive it as requiring a reference point of the multi-robot system (e.g., the center of the robot formation) to follow a desired path parameterized by
	\begin{equation}\label{para_path_Q}
		x_{\mathrm{d}1}^{[i]} = f_1^{[i]}(s_1^{[i]}), \cdots, x_{\mathrm{d}n}^{[i]}=f_n^{[i]}(s_1^{[i]}),
	\end{equation}
	where $s_1^{[i]}\in\mathbb{R}$ is the parameter, and $f_j^{[i]}:\mathbb{R}^n\rightarrow\mathbb{R}$ is twice continuously differentiable. For $i\in\mathbb{Z}_1^N$, we introduce variable $w_1^{[i]}\in\mathbb{R}$ to replace parameter $s_1^{[i]}$ and the desired higher-dimensional path for the \(i\)-th robot in the interception behavior is given by
    \begin{equation*}\label{para_path_P_pro1}
		\mathcal{P}_f^{[i]}:=\{\boldsymbol{\xi_\mathrm{d}^{[i]}}\in\mathbb{R}^{n+1}:\phi_{\mathrm{d}1}^{[i]}(\boldsymbol{\xi^{[i]}_\mathrm{d}})=0, \cdots, \phi_{\mathrm{d}n}^{[i]}(\boldsymbol{\xi^{[i]}_\mathrm{d}})=0\},
	\end{equation*}
	where $\boldsymbol{\xi^{[i]}_\mathrm{d}}=(x_{\mathrm{d}1}^{[i]},\cdots,x_{\mathrm{d}n}^{[i]},w_1^{[i]})^\top\in\mathbb{R}^{n+1}$ and $\phi^{[i]}_{\mathrm{d}j}(\boldsymbol{\xi^{[i]}_\mathrm{d}})$ is defined as follows:
    \begin{equation*}
        \phi_{\mathrm{d}j}^{[i]}(x_{\mathrm{d}1}^{[i]},\cdots,x_{\mathrm{d}n}^{[i]},w_1^{[i]})=x_{\mathrm{d}j}^{[i]} - f_j^{[i]}(w_1^{[i]}).
    \end{equation*}
    
	\emph{Enclosing behavior} requires robots to either distribute (static formation) or move (non-static circumnavigation) along a closed parametric path, centered at $(x_{\mathrm{d}1}^{[i]},\cdots,x_{\mathrm{d}n}^{[i]})$ for the $i$-th robot and is parameterized by
	\begin{equation}\label{para_path_Q}
		x_{\mathrm{r}1}^{[i]} = g_1^{[i]}(s_2^{[i]}), \cdots, x_{\mathrm{r}n}^{[i]}=g_n^{[i]}(s_2^{[i]}),
	\end{equation}
	where $s_2^{[i]}\in\mathbb{R}$ is the parameter, and $g_j^{[i]}$ is twice continuously differentiable. We introduce a variable $w_2^{[i]}\in\mathbb{R}$ to replace $s_2^{[i]}$, for $i\in\mathbb{Z}_1^N$. The $i$-th robot's higher dimensional desired path of the enclosing behavior is described by
     \begin{equation*}
        \mathcal{P}_g^{[i]}:=\{\boldsymbol{\xi_\mathrm{r}^{[i]}}\in\mathbb{R}^{n+1}:\phi_{\mathrm{r}1}^{[i]}(\boldsymbol{\xi_\mathrm{r}^{[i]}})=0, \cdots, \phi^{[i]}_{\mathrm{r}n}(\boldsymbol{\xi_\mathrm{r}^{[i]}})=0\},
    \end{equation*}
	where $\boldsymbol{\xi_\mathrm{r}^{[i]}}:=(x_{\mathrm{r}1}^{[i]},\cdots,x_{\mathrm{r}n}^{[i]},w_2^{[i]})\in\mathbb{R}^{n+1}$ and $\phi_{\mathrm{r}j}^{[i]}(\boldsymbol{\xi_\mathrm{r}^{[i]}})$ is defined as follows:
    \begin{equation*}
        \phi_{\mathrm{r}j}^{[i]}(x_{\mathrm{r}1}^{[i]},\cdots,x_{\mathrm{r}n}^{[i]},w_2^{[i]})=x_{\mathrm{r}j}^{[i]} - g_j^{[i]}(w_2^{[i]}).
    \end{equation*}

	Denote the global coordinate of the $i$-th robot as $(x_1^{[i]},\cdots,x_n^{[i]})^\top, i\in\mathbb{Z}_1^N$. The definition of the combined behavior is presented as follows. 
	
	\begin{definition}\label{definition_1d_sur}
		 %The robot system $\mathcal{G}=(\mathcal{V},\mathcal{E})$ eventually reaches the combined interception and enclosing status
 The combined behavior of interception and enclosing is achieved if $\lim _{t \rightarrow \infty}\left\|(\phi_1^{[i]}(t),\cdots,\phi_n^{[i]}(t))\right\| = 0$, where $\phi_j^{[i]}(t):=x_j^{[i]}(t)-x_{\mathrm{d}j}^{[i]}-x_{\mathrm{r}j}^{[i]}$ for $i\in\mathbb{Z}_1^N$ and  $j\in\mathbb{Z}_1^n$.
	\end{definition}
 
    Fig.~\ref{fig_illustration} demonstrates versatile maneuvering, consisting of two independent intercepting and enclosing behaviors. By assigning different values to \(\boldsymbol{\Delta_1^*}\) and \(\boldsymbol{\Delta_2^*}\), a variety of distributed maneuvers can be realized, as shown in TABLE~\ref{illustration_table}.
       %\begin{comment}
	\begin{table}[!htbp]
		\begin{threeparttable}[b]
			\centering
			\caption{An illustration of setting different values for $\boldsymbol{\Delta_1^*}$ and $\boldsymbol{\Delta_2^*}$.}\label{illustration_table}
			\begin{tabular}{m{2.5cm}<{\centering}|m{5.3cm}<{\centering} m{4cm}<{\centering} m{4cm}<{\centering}}
				\hline\hline
				\textbf{Cases}&\textbf{Examples}
				\\\hline
				\underline{\textit{Case 1}:} $\boldsymbol{\Delta}_1^{*}=\boldsymbol{0}$ and \emph{evenly} distributed phase angles $\boldsymbol{\Delta_2^*}$ &\vspace{1mm} \includegraphics[width=0.25\textwidth]{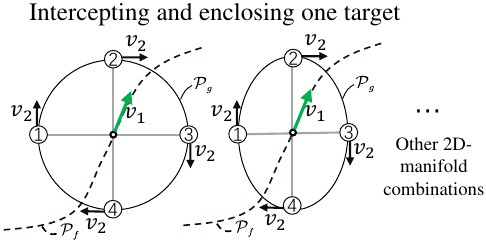}\\
				\hline
				\underline{\textit{Case 2}:} $\boldsymbol{\Delta}_1^{*}=\boldsymbol{0}$ and \emph{unevenly} distributed phase angles $\boldsymbol{\Delta_2^*}$&\vspace{1mm}\includegraphics[width=0.25\textwidth]{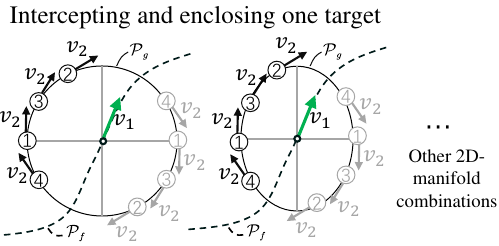}\\
				\hline
				\underline{\textit{Case 3}:} $\boldsymbol{\Delta}_1^{*}\neq\boldsymbol{0}$ and \emph{evenly} distributed phase angles $\boldsymbol{\Delta_2^*}$&\vspace{1mm}\includegraphics[width=0.3\textwidth]{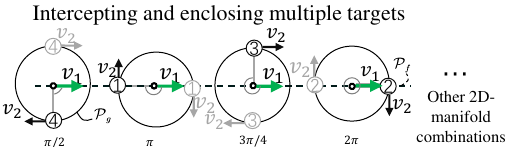}\\ 
				\hline
				\underline{\textit{Case 4}:} $\boldsymbol{\Delta}_1^{*}\neq\boldsymbol{0}$ and \emph{unevenly} distributed phase angles $\boldsymbol{\Delta_2^*}$&\vspace{1mm}\includegraphics[width=0.3\textwidth]{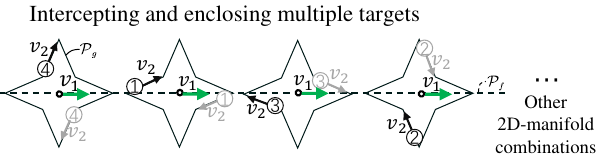}\\ 
				\hline\hline
			\end{tabular}
		\end{threeparttable}
	\end{table}
 %\end{comment}
	\subsubsection{Problem formulation of versatile distributed maneuvering}
    %Yao et al. define the higher dimensional desired surface with virtual coordinates, thereby obtaining the DGVF for coordinated robot navigation\cite{9969449}. 
    To realize versatile distributed maneuvering, the idea is to define a composite 2D manifold to derive the DGVFs\cite{9969449}.	
     Since two virtual coordinates are included for obtaining the desired 2D manifold, the dimension of the \emph{generalized coordinate} is $n+2$.  We formulate the problem as follows:
	\begin{comment}
		\begin{definition}\label{definition0}
			\cite{liu2019collective}. The surrounding status of the multi-robot system $\mathcal{G}=(\mathcal{V},\mathcal{E})$ is attained when all the robots are positioned within a convex hull that encompasses target $q^{[0]}(t)$. Specifically,
			\begin{equation}
				\lim_{t\rightarrow\infty}\min_{q^{[i]}(t)\in\mathrm{co}(v)}\lVert q^{[0]}(t)-q^{[i]}(t) \rVert=0, i\in\mathbb{Z}_1^N
			\end{equation}
			with $\mathrm{co}(v)$ defined by the convex hull of $q^{[1]},\cdots,q^{[N]}$, i.e.,
			\begin{equation}
				\begin{array}{l}
					\mathrm{co}(v):=\{\sum_{i=1}^N\lambda^{[i]}q^{[i]}:\lambda^{[i]}\ge 0,\sum_{i=1}^N\lambda^{[i]}=1\}.
				\end{array}
			\end{equation}
		\end{definition}
	\end{comment}
	
	\begin{problem}\label{problem1}
		Design the DGVF $\boldsymbol{\mathfrak{X}^{[i]}}:\mathbb{R}^{n+2}\rightarrow\mathbb{R}^{n+2}$ in~\eqref{GVF_sur} for $i\in\mathbb{Z}_1^N$, such that the vector field can drive robots to fulfill the following objectives:
		\begin{itemize}
			\item[i)] (\emph{Interception and Enclosing}) The Euclidean norm of the $i$-th robot's following error to its desired composite 2D manifold tends to zero eventually; Namely, $\lim_{t\rightarrow\infty}\lVert(\phi_1^{[i]}(t),\cdots,\phi_n^{[i]}(t))\rVert=0$ for $i\in\mathbb{Z}_1^N$.
			
			\item[ii)] (\emph{Motion Coordination}) For any robots $i$ and $j$, $(i,j)\in\mathcal{E}$, they can communicate with each other to coordinate their motions. Specifically, 1) the virtual coordinates converge to the desired values. Namely, as $t\rightarrow \infty$, one has $w_1^{[i]}(t)-w_1^{[j]}(t)-\Delta_1^{[i, j]}\rightarrow0, w_2^{[i]}(t)-w_2^{[j]}(t)-\Delta_2^{[i, j]}\rightarrow0$; 2) the parametric speed $\dot{w}_1^{[i]}$ converges to the desired interception speed $\dot{w}_1^*$, and robots finally reach the desired enclosing speed $\dot{w}_2^*$. Namely, $\lim_{t\rightarrow\infty}\dot{w}_1^{[i]}(t)=\dot{w}_1^*, \lim_{t\rightarrow\infty}\dot{w}_2^{[i]}(t)=\dot{w}_2^*$.
		\end{itemize}
	\end{problem}
	\subsection{Guiding vector field for versatile distributed maneuvering}	
	\subsubsection{Problem analysis}\label{1d_manifold_pro_ana}First review the definition of functions $\phi_{j}^{[i]}$ in the previous section. To accomplish the first objective of Problem~\ref{problem1}, one needs to ensure the validity of $\lim_{t\rightarrow\infty}\lVert x_j^{[i]}(t)-x_{\mathrm{d}j}^{[i]}(t)-x_{\mathrm{r}j}^{[i]}(t)\rVert = 0$ for $i\in\mathbb{Z}_1^N$ and $j\in\mathbb{Z}_1^n$. Given the parameterized $x_{\mathrm{d}j}^{[i]}$ and $x_{\mathrm{r}j}^{[i]}$, the first objective is accomplished if $\lim_{t\rightarrow\infty}\lVert x_j^{[i]}-f_j^{[i]}(w_1^{[i]})-g_j^{[i]}(w_2^{[i]})\rVert\rightarrow 0$ for $i\in\mathbb{Z}_1^N$ and $j\in\mathbb{Z}_1^n$. Note that $f_j^{[i]}(w_1^{[i]})$ and $g_j^{[i]}(w_2^{[i]})$ correspond to interception and target enclosing behavior, respectively. The second objective can be accomplished if the virtual coordinates converge to their desired values under the consensus theory\cite{mesbahi2010graph}. 
	\subsubsection{Algorithm design}
	Based on the above analysis, we notice that the condition for solving Problem~\ref{problem1} can be equivalently transformed to make the $i$-th robot follow a path on the composite 2D manifold, which is described by $n$ equations, i.e.,
	\begin{equation}\label{parametric_equ_circum}\resizebox{0.9\hsize}{!}{$
	    \begin{aligned}
	        x_{1}^{[i]}=f_1^{[i]}(w_1^{[i]})+g_1^{[i]}(w_2^{[i]}),\cdots,x_{n}^{[i]}=f_n^{[i]}(w_1^{[i]})+g_n^{[i]}(w_2^{[i]}).
	    \end{aligned}$}
	\end{equation}
	
	For the $i$-th robot, the desired composite trajectory is described by a zero-level set, i.e.,
	\begin{equation*}
		\mathcal{P}^{[i]}:=\{\boldsymbol{\xi^{[i]}}\in\mathbb{R}^{n+2}:\phi_j^{[i]}(\boldsymbol{\xi^{[i]}})=0,j\in\mathbb{Z}_1^n\},
	\end{equation*}
	where $\boldsymbol{\xi^{[i]}}:=(x_1^{[i]},\cdots,x_n^{[i]},w_1^{[i]},w_2^{[i]})\in\mathbb{R}^{n+2}$ is a \emph{generalized coordinate} with two virtual coordinates $w_1^{[i]}$ and $w_2^{[i]}$; and $\phi_j^{[i]}(\boldsymbol{\xi^{[i]}})=x_j^{[i]}-f_j^{[i]}(w_1^{[i]})-g_j^{[i]}(w_2^{[i]})$.
	
	Let 
	\begin{equation*}
		\boldsymbol{\Phi^{[i]}}(\boldsymbol{\xi^{[i]}}):=(\phi_1^{[i]}(\boldsymbol{\xi^{[i]}}),\cdots,\phi_n^{[i]}(\boldsymbol{\xi^{[i]}}))^\top\in\mathbb{R}^n
	\end{equation*}
	quantify the distance of robot $i$ to the desired composite trajectory on a 2D manifold. Since two virtual coordinates are included in the generalized coordinate, an extra vector is required in the wedge product operation, i.e.,
	%\begin{equation}\label{extra_v_path}
		$\boldsymbol{\lambda}=(\lambda_1,\cdots,\lambda_{n},\lambda_{w_2},\lambda_{w_1})^\top\in\mathbb{R}^{n+2}$,
	%\end{equation}
	where $\lambda_{w_1},\lambda_{w_2}\in\mathbb{R}$ are related to the speed of virtual coordinates (see Eq.~\eqref{expansion_higherGVF}). One needs to set ${\lambda}_{w_1}$ and $\lambda_{w_2}$ to reach the desired interception and enclosing speeds. It is also straightforward to introduce two extra vectors for 3D composite manifold cases and so forth. The singularity-free guiding vector field is obtained by
	\begin{equation}\label{path_path_cross}
		\begin{aligned}
			\boldsymbol{\chi^{[i]}}(\boldsymbol{\xi^{[i]}})= & \wedge(\boldsymbol{\nabla \phi_1^{[i]}}(\boldsymbol{\xi^{[i]}}), \ldots, \boldsymbol{\nabla \phi_n^{[i]}}(\boldsymbol{\xi^{[i]}}), \boldsymbol{\lambda}) \\
			& -\sum_{j=1}^n k_j^{[i]} \phi_j^{[i]}(\boldsymbol{\xi^{[i]}}) \boldsymbol{\nabla \phi_j^{[i]}}(\boldsymbol{\xi^{[i]}}),
		\end{aligned}
	\end{equation}
	where $\wedge(\cdot)$ denotes the wedge product operation, $k_j^{[i]}$ are constant gains, $\boldsymbol{\nabla \phi_j^{[i]}}(\boldsymbol{\xi^{[i]}})$ is the gradient of $\phi_j^{[i]}(\boldsymbol{\xi^{[i]}})$ with respect to the generalized coordinate $\boldsymbol{\xi^{[i]}}$, i.e.,
	\begin{equation}\label{path_nabla_phi}\resizebox{0.85\hsize}{!}{$
			\begin{aligned}
				&\boldsymbol{\nabla \phi_j^{[i]}}(\boldsymbol{\xi^{[i]}})\\&=\partial{\phi_j^{[i]}(\boldsymbol{\xi^{[i]})}}/\partial\boldsymbol{\xi^{[i]}}\\&=(\dfrac{\partial\phi_j^{[i]}(\boldsymbol{\xi^{[i]}})}{\partial x_1^{[i]}},\cdots,\dfrac{\partial\phi_j^{[i]}(\boldsymbol{\xi^{[i]}})}{\partial x_n^{[i]}},\dfrac{\partial\phi_j^{[i]}(\boldsymbol{\xi^{[i]}})}{\partial w_1^{[i]}},\dfrac{\partial\phi_j^{[i]}(\boldsymbol{\xi^{[i]}})}{\partial w_2^{[i]}})^\top\\
				&\begin{array}{l}=\left(0,\cdots,1,\cdots,0,-{f_j^{[i]'}}(w_1^{[i]}),-{g_j^{[i]'}}(w_2^{[i]})\right)^\top\end{array},
			\end{aligned}$}
	\end{equation}
	where ${f_j^{[i]'}}(w_1^{[i]}):=\partial{f_j^{[i]}}(w_1^{[i]})/\partial{w_1^{[i]}}$ and ${g_j^{[i]'}}(w_2^{[i]}):=\partial{g_j^{[i]}}(w_2^{[i]})/\partial{w_2^{[i]}}$. For simplicity, ${f_j^{[i]'}}(w_1^{[i]})$ and ${g_j^{[i]'}}(w_2^{[i]})$ are denoted by ${f_j^{[i]'}}$ and ${g_j^{[i]'}}$, respectively.
	According to Eqs.~\eqref{path_path_cross} and~\eqref{path_nabla_phi}, we have the vector field of the (2D) manifold navigation part, i.e.,
	\begin{equation}\label{expansion_higherGVF}\resizebox{0.85\hsize}{!}{$
			\begin{aligned}
				&\boldsymbol{\chi^{[i]}}(\boldsymbol{\xi^{[i]}})=\\&(-1)^n\left[\begin{array}{c}
					\lambda_{w_1}f_1^{[i]'}-\lambda_{w_2}g_1^{[i]'} \\
					\vdots \\ 
					\lambda_{w_1}f_n^{[i]'}-\lambda_{w_2}g_n^{[i]'} \\
					\lambda_{w_1} \\
					-\lambda_{w_2}
				\end{array}\right]+ \left[\begin{array}{c}
					-k_1^{[i]}\phi_1^{[i]} \\
					\vdots \\
					-k_n^{[i]}\phi_n^{[i]} \\
					\sum_{j=1}^n k_j^{[i]} \phi_j^{[i]}f_j^{[i]'} \\
					\sum_{j=1}^n k_j^{[i]} \phi_j^{[i]}g_j^{[i]'}
				\end{array}\right].
			\end{aligned}$}
	\end{equation}
	
	Regarding ${}^{\mathbf{cr}}\boldsymbol {\chi^{[i]}_1}(\boldsymbol{w_1^{[\cdot]}})$ and $ {}^{\mathbf{cr}}\boldsymbol {\chi^{[i]}_2}(\boldsymbol{w_2^{[\cdot]}})$ of the coordinating part, they are obtained by~\eqref{CR_sur}. Therefore, we have the DGVF for versatile distributed maneuvering on the composite 2D manifold, i.e.,
	\begin{equation}\label{final_1d_manifold_gvf}
		\begin{aligned}
			\boldsymbol{\mathfrak{X}^{[i]}}(\boldsymbol{\mathcal{\xi}^{[i]}})=\boldsymbol{ \chi^{[i]}}(\boldsymbol{\xi^{[i]}})+k_{c1}{}^{\mathbf{cr}}\boldsymbol{\chi_1^{[i]}}( \boldsymbol{w_1^{[\cdot]}})+k_{c2}{}^{\mathbf{cr}}\boldsymbol{\chi_2^{[i]}}(\boldsymbol{w_2^{[\cdot]}}),
		\end{aligned}
	\end{equation}
	where $k_{c1},k_{c2}>0$ are coefficients. 

    Two standard and mild assumptions are imposed.
	\begin{assumption}\label{assumption1}
		The communication graph $\mathcal{G}=(\mathcal{V},\mathcal{E})$ contains a directed spanning tree.
	\end{assumption}
	\begin{assumption}\label{assumption2}
		The first derivatives $f_j^{[i]'}$, $g_j^{[i]'}$, and the second derivatives  $f_j^{[i]''}(w_1^{[i]}):=\frac{\partial^2{f_j^{[i]}}(w_1^{[i]})}{\partial{w_1^{[i]}}\partial{w_1^{[i]}}}$, $g_j^{[i]''}(w_2^{[i]}):=\frac{\partial^2{f_j^{[i]}}(w_2^{[i]})}{\partial{w_2^{[i]}}\partial{w_2^{[i]}}}$, are all bounded for  $j\in\mathbb{Z}_1^n$.
	\end{assumption}
 
    Based on Assumptions~\ref{assumption1},\ref{assumption2} and setting $\lambda_{w_1}=(-1)^{n+2}\dot{w}_1^*$, $\lambda_{w_2}=(-1)^{n+1}\dot{w}_2^*$, the designed DGVF, under specific parametric settings for distributed maneuvering, globally solves Problem 1. The proof is in the journal version \cite{yang18}.

	\section{Controller Design For Unicycle Robots}
	If a single-integrator model governs the robot's motion, the designed DGVF can directly generate robot controls. However, taking a unicycle-type robot as an example, one needs to generate feasible control inputs based on the guiding vector field $\boldsymbol{\mathfrak{X}}$. This section will design a controller for unicycle robots traveling at a varying speed $v$. The model for the $i$-th robot is described by
	\begin{equation}\label{unicycle_model}
		\dot{x}^{[i]}_1=v^{[i]}\cos\theta^{[i]}, \dot{x}^{[i]}_2=v^{[i]}\sin\theta^{[i]},\dot{x}^{[i]}_3=u_z^{[i]},\dot{\theta}^{[i]}=u_\theta^{[i]},
	\end{equation}
	where ${x}^{[i]}_1$, ${x}^{[i]}_2$, ${x}^{[i]}_3$ are coordinates of the robot's mass center, $\theta^{[i]}$ is the yaw angle, and $v^{[i]}$, $u_z^{[i]}$, $u_\theta^{[i]}$ are controls.
	
	In the following, $v^{[i]}$, $u_z^{[i]}$, $u_\theta^{[i]}$ are to be designed, respectively. Regarding $v^{[i]}$, it is calculated by $v^{[i]}:=({(\mathfrak{X}_1^{[i]})^2+(\mathfrak{X}_2^{[i]})^2})^{\frac{1}{2}}$, where $\mathfrak{X}_1^{[i]}$ and $\mathfrak{X}_2^{[i]}$ are first two elements of $\boldsymbol{\mathfrak{X}^{[i]}}$. Regarding $u_z^{[i]}$, it is used to let the robot climb or descend, and defined by
	\begin{equation} \label{uz}
		u_z^{[i]}=v^{[i]}\mathfrak{X}_3^{[i]}/({(\mathfrak{X}_1^{[i]})^2+(\mathfrak{X}_2^{[i]})^2})^{\frac{1}{2}},
	\end{equation}
 %where $\mathfrak{X}_3^{[i]}$ is the third component of $\boldsymbol{\mathfrak{X}^{[i]}}$.
	Regarding $u_\theta^{[i]}$, we employ a proportional control manner by

	\begin{equation}\label{u_theta}
		u_\theta^{[i]} =k_\theta(\theta_d^{[i]}-\theta^{[i]}),
	\end{equation}
	where $\theta_d^{[i]}:=\mathrm{atan2}(\frak{X}_2^{[i]},\frak{X}_1^{[i]})$ and $k_{\theta}>0$ is the coefficient, $\mathrm{atan2}(\cdot,\cdot):\mathbb{R}\times\mathbb{R}\rightarrow [-\pi,\pi)$ maps two components $\frak{X}_2^{[i]}$ and $\frak{X}_1^{[i]}$ to the angle to the X-axis. %When calculation, one needs to map $\theta^{[i]}$ and the final angle difference $\theta_d^{[i]}-\theta^{[i]}$ to $[-\pi,\pi]$.
     The control inputs are validated by the following result.
	\begin{theorem}\label{theorem2}
		Assume that $(\mathfrak{X}_1^{[i]}(\boldsymbol{\xi^{[i]}}))^2+(\mathfrak{X}_2^{[i]}(\boldsymbol{\xi^{[i]}}))^2>0$ for $i\in\mathbb{Z}_1^N$ and $\boldsymbol{\xi^{[i]}}\in\mathbb{R}^5$, the angle difference vanishes asymptotically using control inputs~\eqref{u_theta}, \eqref{uz} in~\eqref{unicycle_model}. 
	\end{theorem}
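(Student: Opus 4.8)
The plan is to study the scalar heading error $\sigma^{[i]} := \theta_d^{[i]} - \theta^{[i]}$, regarded as an element of $\mathbb{S}^1$ (its representative in $(-\pi,\pi]$), and to show it is driven to zero by the proportional law~\eqref{u_theta}. First I would note that the standing hypothesis $(\mathfrak{X}_1^{[i]})^2 + (\mathfrak{X}_2^{[i]})^2 > 0$ is precisely what guarantees that $\theta_d^{[i]} = \mathrm{atan2}(\mathfrak{X}_2^{[i]}, \mathfrak{X}_1^{[i]})$ is well defined and continuously differentiable (the map is smooth away from the origin of the planar projection), which legitimizes differentiating it along trajectories. Substituting $\dot\theta^{[i]} = u_\theta^{[i]} = k_\theta \sigma^{[i]}$ from~\eqref{unicycle_model} and~\eqref{u_theta} yields the error dynamics $\dot\sigma^{[i]} = \dot\theta_d^{[i]} - k_\theta \sigma^{[i]}$, i.e.\ a scalar exponentially stable system forced by $\dot\theta_d^{[i]}$.

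Next I would analyze this forced system. With $V^{[i]} = \tfrac12 (\sigma^{[i]})^2$ one obtains $\dot V^{[i]} = -k_\theta (\sigma^{[i]})^2 + \sigma^{[i]} \dot\theta_d^{[i]}$, so the map from the disturbance $\dot\theta_d^{[i]}$ to $\sigma^{[i]}$ is input-to-state stable and $|\sigma^{[i]}|$ converges into a ball of radius $\sup_t |\dot\theta_d^{[i]}(t)|/k_\theta$. To control the forcing term I would compute $\dot\theta_d^{[i]} = (\mathfrak{X}_1^{[i]} \dot{\mathfrak{X}}_2^{[i]} - \mathfrak{X}_2^{[i]} \dot{\mathfrak{X}}_1^{[i]})/((\mathfrak{X}_1^{[i]})^2 + (\mathfrak{X}_2^{[i]})^2)$, which is finite by the nonsingularity assumption, and bound each $\dot{\mathfrak{X}}_k^{[i]} = (\nabla_{\boldsymbol{\xi^{[i]}}} \mathfrak{X}_k^{[i]})^\top \dot{\boldsymbol{\xi}}^{[i]}$ using Assumption~\ref{assumption2} (boundedness of $f_j^{[i]'}, g_j^{[i]'}, f_j^{[i]''}, g_j^{[i]''}$) together with the boundedness of $v^{[i]}$ and of $\dot w_1^{[i]}, \dot w_2^{[i]}$ inherited from the consensus dynamics. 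This gives a uniform bound on $\dot\theta_d^{[i]}$ and hence ultimate boundedness of $\sigma^{[i]}$.

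To upgrade ultimate boundedness to asymptotic vanishing I would invoke the single-integrator convergence established for Problem~\ref{problem1} (see~\cite{yang18}): as $\phi_j^{[i]} \to 0$ and $\dot w_1^{[i]} \to \dot w_1^*, \dot w_2^{[i]} \to \dot w_2^*$, the transient component of $\dot\theta_d^{[i]}$ arising from $\dot\phi_j^{[i]}$ decays, so that in the regimes where the steady-state path-tangent direction stabilizes one has $\dot\theta_d^{[i]} \to 0$; by the converging-input--converging-state property of the ISS error system (equivalently Barbalat's lemma applied to $V^{[i]}$) one concludes $\sigma^{[i]} \to 0$. Once $\sigma^{[i]} \to 0$, the planar velocity $(\dot x_1^{[i]}, \dot x_2^{[i]}) = v^{[i]}(\cos\theta^{[i]}, \sin\theta^{[i]})$ aligns with $(\mathfrak{X}_1^{[i]}, \mathfrak{X}_2^{[i]})$, and since~\eqref{uz} forces $\dot x_3^{[i]} = \mathfrak{X}_3^{[i]}$, the full velocity coincides in direction with $\boldsymbol{\mathfrak{X}^{[i]}}$, closing the gap between the unicycle and the single-integrator model.

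The main obstacle is the time-varying desired heading. Because~\eqref{u_theta} is purely proportional and carries no $\dot\theta_d^{[i]}$ feedforward, for genuinely curved orbits with nonzero enclosing speed $\dot w_2^* \neq 0$ the steady-state heading keeps rotating, so $\dot\theta_d^{[i]}$ does not vanish and $\sigma^{[i]}$ settles to a residual lag of order $\dot\theta_d^{[i]}/k_\theta$ rather than exactly zero; the clean ``vanishes asymptotically'' statement therefore rests either on $\dot\theta_d^{[i]} \to 0$ (as in static formations or straight-line interception) or on a high-gain, practical-stability reading, and pinning down this dichotomy is the delicate part of the argument. A secondary technicality is the $\mathrm{atan2}$ wrap-around: $\sigma^{[i]}$ must be treated as an element of $\mathbb{S}^1$ and the analysis confined to the set where $(\mathfrak{X}_1^{[i]})^2 + (\mathfrak{X}_2^{[i]})^2 > 0$, on which $\theta_d^{[i]}$ is smooth.
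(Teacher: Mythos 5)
Your proposal follows essentially the same route as the paper---the same error $e=\theta_d^{[i]}-\theta^{[i]}$ and the same Lyapunov candidate $V=\tfrac{1}{2}e^2$---but you execute it more carefully than the paper does. The paper's proof computes $\dot e=-\dot\theta^{[i]}$, i.e.\ it silently drops the $\dot\theta_d^{[i]}$ term (treating the desired heading as constant), which yields $\dot V=-k_\theta e^2<0$ and an immediate conclusion. You instead retain the forcing term, obtain the forced error dynamics $\dot\sigma^{[i]}=\dot\theta_d^{[i]}-k_\theta\sigma^{[i]}$, establish input-to-state stability with ultimate bound $\sup_t|\dot\theta_d^{[i]}|/k_\theta$, and then correctly observe that exact asymptotic vanishing requires $\dot\theta_d^{[i]}\to0$---which fails for curved orbits with $\dot w_2^*\neq0$, where the desired heading keeps rotating and a residual lag of order $\dot\theta_d^{[i]}/k_\theta$ persists. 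What you flag as ``the delicate part of the argument'' is precisely the step the paper's proof skips: as written, the clean conclusion of Theorem~\ref{theorem2} only follows under the implicit assumption that $\theta_d^{[i]}$ is asymptotically constant, or in a practical-stability/high-gain reading. Your version is the more defensible one; to recover the stated conclusion exactly one would need either a feedforward term $\dot\theta_d^{[i]}$ added to the control~\eqref{u_theta} or a restriction of the claim to regimes where the steady-state heading stops rotating.
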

	\begin{proof}
		First, define the angle difference by $e:=\theta_d^{[i]}-\theta^{[i]}$. Choose the Lyapunov function candidate $V=1/2e^2$, and the time derivative is
		\begin{equation}
			\dot{V}=e\dot{e}=-(\theta_d^{[i]}-\theta^{[i]})\dot{\theta}^{[i]}\overset{\eqref{u_theta}}{=}-k_\theta(\theta_d^{[i]}-\theta^{[i]})^2<0.
		\end{equation}
		Since the derivative of the Lyapunov candidate is negative definite, the angle difference $\lim_{t\rightarrow\infty}e(t)\rightarrow 0$ after sustainably applying  control~\eqref{u_theta} to~\eqref{unicycle_model}. 
	\end{proof}

	\begin{remark}
		One can adopt existing techniques for collision avoidance (e.g., \cite{wang2017safety, yao2022guiding}). %For example, we adopt a weighted summation of $\boldsymbol{\mathfrak{X}^{[i]}}(t,\boldsymbol{\mathcal{\xi}^{[i]}})$ and obstacle-avoidance vector field $\boldsymbol{{}^{\mathrm{oa}}\mathfrak{X}^{[i]}}(t,\boldsymbol{\mathcal{\xi}^{[i]}})$ to realize maneuvering control and collision avoidance simultaneously. 
        The result is shown in the supplementary video, but the theoretical analysis will be provided in the journal version \cite{yang18}.
		$\hfill\blacktriangleleft$
	\end{remark}
	\section{Simulation and Experimental Results}\label{simulation_experiments}
	This section conducts simulations and real-world experiments on multiple mobile robots to demonstrate the proposed approach's effectiveness and validate the theoretical results.
	\subsection{Simulations}
	The robot's communication topology is described by undirected graphs; i.e., the robots exchange virtual coordinate information with their neighboring robots. To facilitate the validation of the theoretical results, the target's motion is preset in enclosing and circumnavigation simulations; i.e., its velocity and path are known. Note that our approach also works under the target's \emph{unknown movement}, which has been validated by real-world experimental results in Sec.~\ref{Sec_exp}.
    
	\subsubsection{Model: single-integrator; Motion: formation and  enclosing maneuvering}
	
	\begin{comment}
		$$\mathcal{P}_f:=\left\{ \begin{array}{c}\begin{aligned}
				x_1&=w_1\\
				x_2&=5\sin \left({\pi w_1}/{20}\right)\\
				x_3&=0\\
				\dot{w}_1^*&=1 \rm{m/s}\\
			\end{aligned}, \mathcal{P}_g:=\left\{\begin{array}{c}\begin{aligned}
					x_1&=5\cos w_2\\
					x_2&=5\sin w_2\\
					x_3&=0\\
					\dot{w}_2^*&=0 \mathrm{m/s}\\
				\end{aligned},\end{array} \right.
		\end{array} \right. $$
	\end{comment}
	
	We let a group of $82$ robots realize formation tracking on the 2D plane, with the randomly initialized positional coordinates in $\mathbb{R}^3$. The interception manifold $\mathcal{P}_f^{[i]}:=\{\boldsymbol{\xi_\mathrm{d}^{[i]}}\in\mathbb{R}^{n+1}:x_{\mathrm{d}1}^{[i]}-w_1^{[i]}=0,x_{\mathrm{d}2}^{[i]}-5\sin (\pi w_1^{[i]}/50)=0,x_{\mathrm{d}3}^{[i]}=0\},i\in\mathbb{Z}_1^{82}$ makes the robots follow the sine-wave path, and the enclosing manifold $\mathcal{P}_g^{[i]}:=\{\boldsymbol{\xi_\mathrm{r}^{[i]}}\in\mathbb{R}^{n+1}:
	x_{\mathrm{r}1}^{[i]}-\alpha^{[i]}=0, x_{\mathrm{r}2}^{[i]}-\beta^{[i]}=0, x_{\mathrm{r}3}^{[i]}=0\},i\in\mathbb{Z}_1^{82}$ makes the robots form the ``{\tt ICRA25}" shape,
	\begin{comment} $$\mathcal{P}_f:=\left\{ \begin{array}{c}\begin{aligned}
				x_1&=w_1\\
				x_2&=5\sin \left(\pi w_1/50\right)\\
				x_3&=0\\
				\dot{w}_1^*&=3 \rm{m/s}\\
			\end{aligned}, \mathcal{P}_g:=\left\{\begin{array}{c}\begin{aligned}
					x_1&=\alpha\\
					x_2&=\beta\\
					x_3&=0\\
					\dot{w}_2^*&=0 \mathrm{m/s}\\
				\end{aligned},\end{array} \right.
		\end{array} \right. $$
	\end{comment}
	where $\alpha^{[i]},\beta^{[i]}\in\mathbb{R}$ are the desired displacements between neighboring robots.
	We also set $\boldsymbol{\Delta_{1|1:13}^{*}}=\boldsymbol{0}, \boldsymbol{\Delta_{1|14:20}^{*}}=\boldsymbol{10}, \boldsymbol{\Delta_{1|21:38}^{*}}=\boldsymbol{20},\boldsymbol{\Delta_{1|39:48}^{*}}=\boldsymbol{30}, \boldsymbol{\Delta_{1|49:65}^{*}}=\boldsymbol{40}, \boldsymbol{\Delta_{1|66:82}^{*}}=\boldsymbol{50}$ and $\boldsymbol{\Delta_{2}^{*}}=\boldsymbol{0}$, where $a:b$ represents the index range from $a$ to $b$. The desired path-following speed is set as ${\dot{w}_1^*}={3} \rm{m/s}$ and ${\dot{w}_2^*}$ will not work since $\mathcal{P}_g^{[i]}$ is not parameterized by $w_2^{[i]}$.
	The formation tracking results are shown in Fig.~\ref{fig_two_d_formation}. 
 %\begin{comment}
     In the second simulation, we let $10$ robots enclose the target. The interception manifold $\mathcal{P}_f^{[i]}:=\{\boldsymbol{\xi_\mathrm{d}^{[i]}}\in\mathbb{R}^{n+1}:x_{\mathrm{d}1}^{[i]}-30\cos w_1^{[i]}=0, x_{\mathrm{d}2}^{[i]}-30\sin w_1^{[i]}=0, x_{\mathrm{d}3}^{[i]}=0\},i\in\mathbb{Z}_1^{10}$ makes robots intercept the target moving along the circular path, and the enclosing manifold $\mathcal{P}_g^{[i]}:=\{\boldsymbol{\xi_\mathrm{r}^{[i]}}\in\mathbb{R}^{n+1}:x_{\mathrm{r}1}^{[i]}-10\sin w_2^{[i]}\tan^{-1}(w_2^{[i]})=0,x_{\mathrm{r}2}^{[i]}=0,	x_{\mathrm{r}3}^{[i]}-10\cos w_2^{[i]}=0\},i\in\mathbb{Z}_1^{10}$ makes the robots enclose on an ellipse-like path.
	We set $\boldsymbol{\Delta_1^*}=\boldsymbol{0}$, $\boldsymbol{\Delta_2^*}=[\frac{2\pi}{10},\frac{4\pi}{10},\cdots,2\pi]$ and speeds ${\dot{w}_1^*}={3} \rm{m/s}$ and ${\dot{w}_2^*}={3} \rm{m/s}$. The  target enclosing results are illustrated in Fig.~\ref{fig_1manifold_sur}. %\end{comment}
	\begin{figure}[!htbp]
		\centering\includegraphics[width=3.3in]{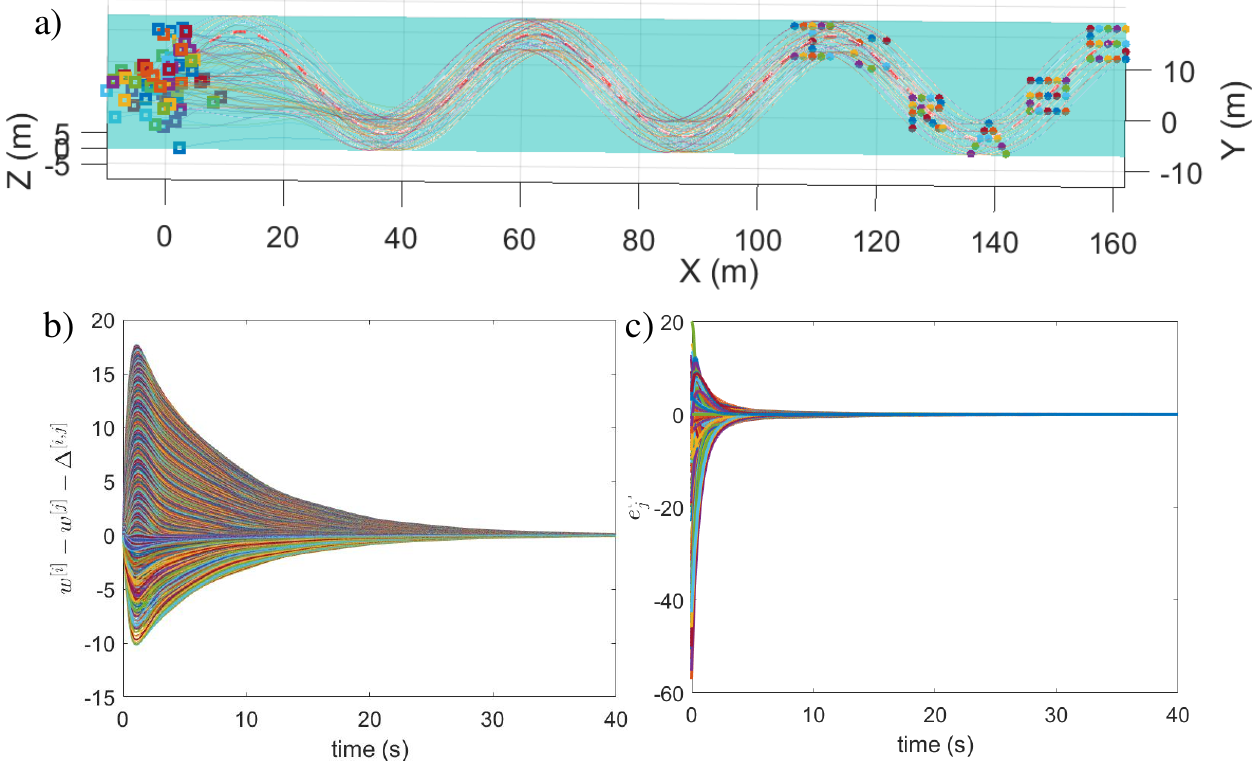}
		\caption{The first simulation results. The squares represent the initial positions of the robots, whereas the solid circles denote their final positions. The red dashed line is the target's path, and the thin lines are trajectories of $82$ robots. a) formation tracking. b) Coordination errors $w_1^{[i]}-w_1^{[j]}-\Delta_1^{[i,j]}$ and $w_2^{[i]}-w_2^{[j]}-\Delta_2^{[i,j]}$ converge to zero eventually, for $i,j\in\mathbb{Z}_1^{82}$, $i<j$. c) Formation errors $\phi_j^{[i]}$ for $i\in\mathbb{Z}_1^{82}$, $j\in\mathbb{Z}_1^3$.}
		\label{fig_two_d_formation}
	\end{figure}
	%\begin{comment}
	\begin{figure}[!htbp]
		\centering\includegraphics[width=3.0in]{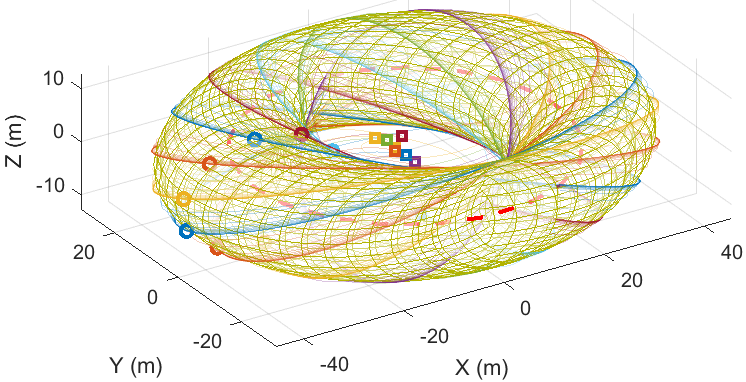}
		\caption{The second simulation results. The squares and circles represent the initial and final positions of the robots, respectively. The red dashed line is the target's path, and the thin lines are trajectories of $10$ robots.}
		\label{fig_1manifold_sur}
	\end{figure}
     %\end{comment}

     \begin{figure}[!htbp]
		\centering\includegraphics[width=2.5in]{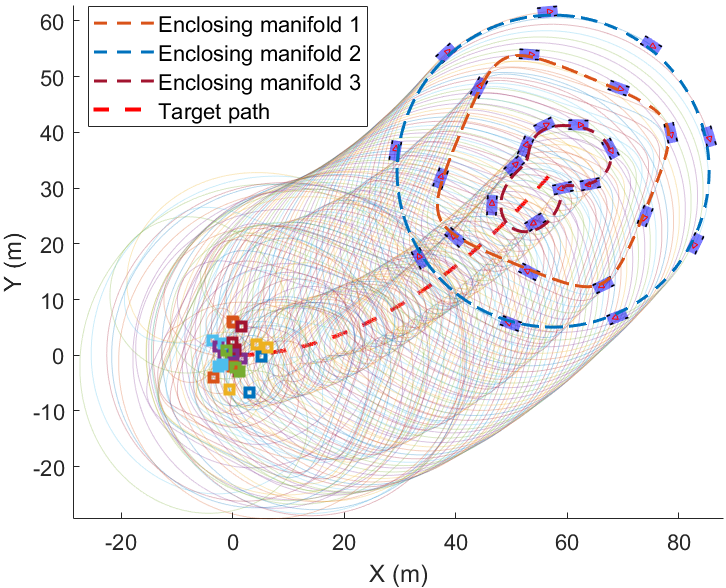}
		\caption{The third simulation results. The squares represent the initial positions of $27$ unicycle robots, whereas the blue rectangles are the robots at the final instant. The thin lines are the robots' trajectories. Solid lines with different colors denote the formed orbits at the final instant. %a) circumnavigation of the moving target. b) Coordination errors $w_1^{[i]}-w_1^{[j]}-\Delta_1^{[i,j]}$ and $w_2^{[i]}-w_2^{[j]}-\Delta_2^{[i,j]}$ converge to zero eventually, for $i,j\in\mathbb{Z}_1^{50}$ and $i<j$. c) Circumnavigation errors $\phi_j^{[i]}$ for $i\in\mathbb{Z}_1^{50}$, $j\in\mathbb{Z}_1^3$.
		}
		\label{fig_1manifold_sym}
	\end{figure} 
	\subsubsection{Model: unicycle; Motion: circumnavigation}
	We let $27$ robots circumnavigate a moving target. The interception manifold $\mathcal{P}_f^{[i]}:=\{\boldsymbol{\xi_\mathrm{d}^{[i]}}\in\mathbb{R}^{n+1}:x_{\mathrm{d}1}^{[i]}-w_1^{[i]}=0,x_{\mathrm{d}2}^{[i]}-0.01(w_1^{[i]})^2=0,x_{\mathrm{d}3}^{[i]}=0\},i\in\mathbb{Z}_1^{27}$ makes the robots intercept the target moving along a curve line. The enclosing manifolds $\mathcal{P}_g^{[i]}:=\{\boldsymbol{\xi_\mathrm{r}^{[i]}}\in\mathbb{R}^{n+1}:x_{\mathrm{r}1}^{[i]}-8(1 + 0.4 \sin(2w_2^{[i]}) + 0.2\cos(3 w_2^{[i]}))\cos(w_2^{[i]})=0,x_{\mathrm{r}2}^{[i]}-8(1 + 0.4 \sin(2w_2^{[i]}) + 0.2\cos(3 w_2^{[i]}))\sin(w_2^{[i]})=0,x_{\mathrm{r}3}^{[i]}=0\}$, $i\in\mathbb{Z}_1^9$, $\mathcal{P}_g^{[i]}:=\{\boldsymbol{\xi_\mathrm{r}^{[i]}}\in\mathbb{R}^{n+1}:x_{\mathrm{r}1}^{[i]}-28\cos w_2^{[i]}=0,x_{\mathrm{r}2}^{[i]}-28\sin w_2^{[i]}=0,x_{\mathrm{r}3}^{[i]}=0\}$, $i\in\mathbb{Z}_{10}^{18}$, and $\mathcal{P}_g^{[i]}:=\{\boldsymbol{\xi_\mathrm{r}^{[i]}}\in\mathbb{R}^{n+1}:x_{\mathrm{r}1}^{[i]}-20(\cos(w_2^{[i]})+0.1\sin(3w_2^{[i]}))=0,x_{\mathrm{r}2}^{[i]}-20(\sin(w_2^{[i]})+0.1\cos(3w_2^{[i]}))=0,x_{\mathrm{r}3}^{[i]}=0\}$, $i\in\mathbb{Z}_{19}^{27}$ make the robots circumnavigate the target coordinately. We set $\boldsymbol{\Delta_1^*}=\boldsymbol{0}$, $\boldsymbol{\Delta_2^*}=(i-1)6\pi/27$ for $i\in\mathbb{Z}_1^{27}$ and the desired speeds $\dot{w}_1^*=3 \rm{m/s}$, $\dot{w}_2^*=2\mathrm{m/s}$. The target circumnavigation results are illustrated in Fig.~\ref{fig_1manifold_sym}. 
	\subsection{Real-world experiments}\label{Sec_exp}
	Experiments were performed on our custom-made multi-robot platform \cite{zhu2024dvrpmhsidynamicvisualizationresearch}. The robots' positions are acquired from the touch screen at a frequency of $200$ Hz. Based on the communication topology described by undirected graphs and the designed approach, the ground computer computes the saturated control inputs, which are transmitted to each robot at $30$ Hz.
 %
 %, and the neighboring robots exchange virtual coordinate information when calculating the saturated control inputs on a computer. Each robot receives the control inputs at $30$ Hz. %The robot's motion conforms to each characteristic of a single integrator model. Thus we can directly apply the desired velocity of the $i$-th robot, calculated by $\boldsymbol{\mathfrak{X}^{[i]}}(\boldsymbol{\mathcal{\xi}^{[i]}})$, to the actuator of the robot. 
 %
 The robots are required to constitute a circular formation to circumnavigate a moving target. To this end, the interception manifold $\mathcal{P}_f^{[i]}:=\{\boldsymbol{\xi_\mathrm{d}^{[i]}}\in\mathbb{R}^{n+1}:x_{\mathrm{d}1}^{[i]}-x_{\mathrm{target}}=0,x_{\mathrm{d}2}^{[i]}-y_{\mathrm{target}}=0,x_{\mathrm{d}3}^{[i]}=0\}, i\in\mathbb{Z}_1^{5}$ makes the robots intercept the target, and the enclosing manifold $\mathcal{P}_g^{[i]}:=\{\boldsymbol{\xi_\mathrm{r}^{[i]}}\in\mathbb{R}^{n+1}:x_{\mathrm{r}1}^{[i]}-0.24\cos w_2^{[i]}=0,x_{\mathrm{r}2}^{[i]}-0.24\sin w_2^{[i]}=0,x_{\mathrm{r}3}^{[i]}=0\}, i\in\mathbb{Z}_1^{5}$ makes the robots circumnavigate on the circular path, where $(x_{\mathrm{target}},y_{\mathrm{target}})$ is the target's real-time position.
	\begin{comment}
		$$\mathcal{P}_f:=\left\{ \begin{array}{c}\begin{aligned}
				x_1&=0.2\cos w_1\\
				x_2&=0.2\sin w_1\\
				x_3&=0\\
				\dot{w}_1^*&=0.01 \rm{m/s}\\
			\end{aligned}, \mathcal{P}_g:=\left\{\begin{array}{c}\begin{aligned}
					x_1&=x_{\mathrm{target}}\\
					x_2&=y_{\mathrm{target}}\\
					x_3&=0\\
					\dot{w}_2^*&=0.02 \mathrm{m/s}\\
			\end{aligned}\end{array} \right.
		\end{array} \right. ,$$
	\end{comment}
	We set  $\boldsymbol{\Delta_1^*}=\boldsymbol{0}$, $\boldsymbol{\Delta_2^*}=[0,\frac{2\pi}{5},\frac{4\pi}{5},\frac{6\pi}{5},\frac{8\pi}{5}]$, and the desired enclosing speed $\dot{w}_2^*=0.02 \mathrm{m/s}$. Since $\mathcal{P}_f^{[i]}$ is not parameterized by $w_1^{[i]}$, the $\dot{w}_1^*$ has no effect.
 The experimental results are shown in Fig.~\ref{fig_fig_mv_sr_line}-a). Then we set $\mathcal{P}_g^{[i]}:=\{\boldsymbol{\xi_\mathrm{r}^{[i]}}\in\mathbb{R}^{n+1}:x_{\mathrm{r}1}^{[i]}-0.24(\cos(w_2^{[i]})+0.2\sin(4w_2^{[i]}))=0,x_{\mathrm{r}2}^{[i]}-0.24(\sin(w_2^{[i]})+0.2\cos(4w_2^{[i]}))=0,x_{\mathrm{r}3}^{[i]}=0\}, i\in\mathbb{Z}_1^{5}$, to circumnavigate the target on the star-shaped orbit. The other settings are consistent with the first experimental task, and the results are presented in Fig.~\ref{fig_fig_mv_sr_line}-b).
\begin{comment}
	\begin{figure}[!htbp]
		\centering\includegraphics[width=2.0in]{image/exp_robot_plat1.pdf}
		\caption{The multi-robot experimental platform.}
		\label{fig_exp_robot_plat}
	\end{figure}
 \end{comment}
	\begin{figure}[!htbp]
		\centering\includegraphics[width=2.7in]{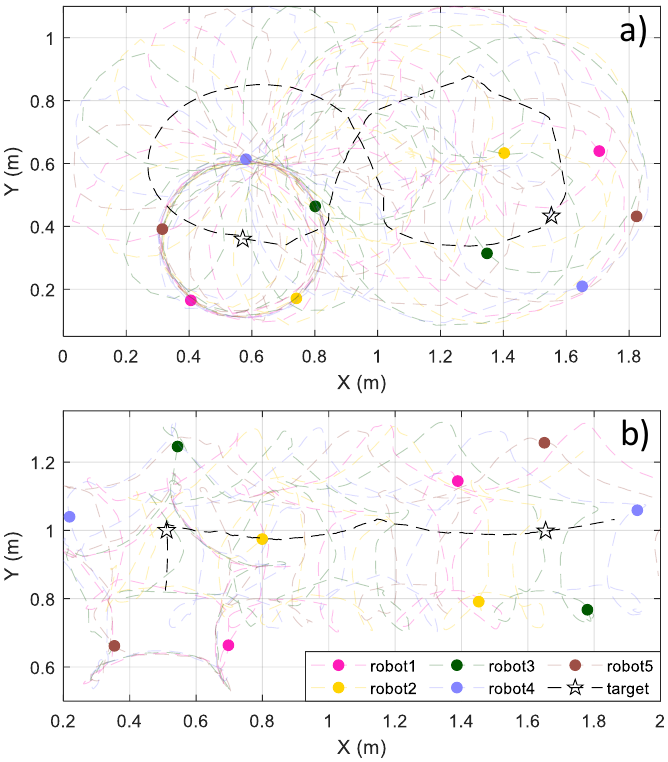}
		\caption{Five robots circumnavigate on the a) circular and b) star-shaped orbits. We label the positions of robots and the target at two instants.}
		\label{fig_fig_mv_sr_line}
	\end{figure}
 \begin{comment}
 	\begin{figure}[!htbp]
		\centering\includegraphics[width=2.8in]{image/edit_fig_mv_sr_eight.png}
		\caption{The second experimental results. Five robots circumnavigate the irregularly moving target. The robots' and the target's positions at different instants are labeled with solid and hollow circles, respectively.
		}
		\label{fig_fig_mv_sr_eight}
	\end{figure} 
	\end{comment}
 \section{Conclusion}\label{Conclusion}
	This paper proposes a versatile distributed maneuvering approach for multi-robot systems using guiding vector fields. The two intercepting and enclosing behaviors are parameterized by independent virtual coordinates and combined as a composite 2D manifold. Neighboring robots communicate the information of virtual coordinates to achieve motion coordination. We have presented assumptions and analyses for solving the formulated versatile maneuvering problem. Simulations of formation maneuvering, target enclosing, circumnavigation with single-integrator and unicycle models, and real-world experiments on a group of mobile robots demonstrate the effectiveness of the versatile approach. Note that the proposed approach can be straightforwardly generalized to combine more than two behaviors, and the resulting guiding vector field does not have singularities; i.e., deadlock is avoided. 
	Future research directions may include state estimation of the target and detailed theoretical analysis of collision avoidance among robots.

	\bibliographystyle{IEEEtran}
	\bibliography{IEEEabrv.bib}

% Generated by IEEEtran.bst, version: 1.14 (2015/08/26)
\begin{thebibliography}{10}
\providecommand{\url}[1]{#1}
\csname url@samestyle\endcsname
\providecommand{\newblock}{\relax}
\providecommand{\bibinfo}[2]{#2}
\providecommand{\BIBentrySTDinterwordspacing}{\spaceskip=0pt\relax}
\providecommand{\BIBentryALTinterwordstretchfactor}{4}
\providecommand{\BIBentryALTinterwordspacing}{\spaceskip=\fontdimen2\font plus
\BIBentryALTinterwordstretchfactor\fontdimen3\font minus
  \fontdimen4\font\relax}
\providecommand{\BIBforeignlanguage}[2]{{%
\expandafter\ifx\csname l@#1\endcsname\relax
\typeout{** WARNING: IEEEtran.bst: No hyphenation pattern has been}%
\typeout{** loaded for the language `#1'. Using the pattern for}%
\typeout{** the default language instead.}%
\else
\language=\csname l@#1\endcsname
\fi
#2}}
\providecommand{\BIBdecl}{\relax}
\BIBdecl

\bibitem{notomista2022multi}
G.~Notomista, C.~Pacchierotti, and P.~R. Giordano, ``Multi-robot persistent
  environmental monitoring based on constraint-driven execution of learned
  robot tasks,'' in \emph{2022 International Conference on Robotics and
  Automation (ICRA)}.\hskip 1em plus 0.5em minus 0.4em\relax IEEE, 2022, pp.
  6853--6859.

\bibitem{kabir2021efficient}
R.~H. Kabir and K.~Lee, ``Efficient, decentralized, and collaborative
  multi-robot exploration using optimal transport theory,'' in \emph{2021
  American Control Conference (ACC)}.\hskip 1em plus 0.5em minus 0.4em\relax
  IEEE, 2021, pp. 4203--4208.

\bibitem{corah2021scalable}
M.~Corah and N.~Michael, ``Scalable distributed planning for multi-robot,
  multi-target tracking,'' in \emph{2021 IEEE/RSJ International Conference on
  Intelligent Robots and Systems (IROS)}.\hskip 1em plus 0.5em minus
  0.4em\relax IEEE, 2021, pp. 437--444.

\bibitem{rosenfelder2021cooperative}
M.~Rosenfelder, H.~Ebel, and P.~Eberhard, ``Cooperative distributed model
  predictive formation control of non-holonomic robotic agents,'' in \emph{2021
  International Symposium on Multi-Robot and Multi-Agent Systems (MRS)}.\hskip
  1em plus 0.5em minus 0.4em\relax IEEE, 2021, pp. 11--19.

\bibitem{ju2022mpc}
S.~Ju, J.~Wang, and L.~Dou, ``{MPC}-based cooperative enclosing for
  nonholonomic mobile agents under input constraint and unknown disturbance,''
  \emph{IEEE Transactions on Cybernetics}, vol.~53, no.~2, pp. 845--858, 2022.

\bibitem{ghommam2022design}
J.~Ghommam, M.~H. Rahman, and M.~Saad, ``Design of distributed event-triggered
  circumnavigation control of a moving target by a group of underactuated
  surface vessels,'' \emph{European Journal of Control}, vol.~67, p. 100702,
  2022.

\bibitem{li2012distributed}
Z.~Li, X.~Liu, W.~Ren, and L.~Xie, ``Distributed tracking control for linear
  multiagent systems with a leader of bounded unknown input,'' \emph{IEEE
  Transactions on Automatic Control}, vol.~58, no.~2, pp. 518--523, 2012.

\bibitem{wang2017distributed}
W.~Wang, C.~Wen, and J.~Huang, ``Distributed adaptive asymptotically consensus
  tracking control of nonlinear multi-agent systems with unknown parameters and
  uncertain disturbances,'' \emph{Automatica}, vol.~77, pp. 133--142, 2017.

\bibitem{de2016distributed}
H.~G. De~Marina, B.~Jayawardhana, and M.~Cao, ``Distributed rotational and
  translational maneuvering of rigid formations and their applications,''
  \emph{IEEE Transactions on Robotics}, vol.~32, no.~3, pp. 684--697, 2016.

\bibitem{xu2020moving}
B.~Xu, H.-T. Zhang, H.~Meng, B.~Hu, D.~Chen, and G.~Chen, ``Moving target
  surrounding control of linear multiagent systems with input saturation,''
  \emph{IEEE Transactions on Systems, Man, and Cybernetics: Systems}, vol.~52,
  no.~3, pp. 1705--1715, 2020.

\bibitem{xu2023dual}
B.~Xu, H.-T. Zhang, Y.~Zheng, Y.~Wu, and Y.~Shi, ``Dual-stage heterogeneous
  multiagent systems surrounding control for a motional target,'' \emph{IEEE
  Transactions on Control of Network Systems}, vol.~11, no.~1, pp. 78--88,
  2023.

\bibitem{hu2021bearing}
B.-B. Hu and H.-T. Zhang, ``Bearing-only motional target-surrounding control
  for multiple unmanned surface vessels,'' \emph{IEEE Transactions on
  Industrial Electronics}, vol.~69, no.~4, pp. 3988--3997, 2021.

\bibitem{shao2022distributed}
X.~Shao, J.~Zhang, and W.~Zhang, ``Distributed cooperative surrounding control
  for mobile robots with uncertainties and aperiodic sampling,'' \emph{IEEE
  Transactions on Intelligent Transportation Systems}, vol.~23, no.~10, pp.
  18\,951--18\,961, 2022.

\bibitem{sinha20223}
A.~Sinha and Y.~Cao, ``{3-D} nonlinear guidance law for target
  circumnavigation,'' \emph{IEEE Control Systems Letters}, vol.~7, pp.
  655--660, 2022.

\bibitem{yu2022optimal}
Y.~Yu, X.~Wang, Z.~Sun, and L.~Shen, ``Optimal control of nonlinear systems
  with unsymmetrical input constraints and its application to the uav
  circumnavigation problem,'' \emph{IEEE Transactions on Systems, Man, and
  Cybernetics: Systems}, vol.~53, no.~5, pp. 2815--2827, 2022.

\bibitem{luo2024optimal}
Y.~Luo, Y.~Li, J.~Ding, and H.~Zhang, ``Optimal moving-target circumnavigation
  control of multiple wheeled mobile robots based on adaptive dynamic
  programming,'' \emph{IEEE Transactions on Network Science and Engineering},
  2024.

\bibitem{wang2024target}
J.~Wang, B.~Ma, and Y.~He, ``Target circumnavigation of mobile robots in
  {GPS}-denied environments,'' \emph{IEEE Robotics and Automation Letters},
  2024.

\bibitem{jiang2023safety}
Y.~Jiang, Z.~Peng, and J.~Wang, ``Safety-certified multi-target
  circumnavigation with autonomous surface vehicles via neurodynamics-driven
  distributed optimization,'' \emph{IEEE Transactions on Systems, Man, and
  Cybernetics: Systems}, 2023.

\bibitem{yan2024distributed}
Z.~Yan, H.~Zheng, Z.~Jiang, and W.~Xu, ``Distributed control of unmanned marine
  vehicles for target circumnavigation in communication-denied environments,''
  \emph{IEEE/ASME Transactions on Mechatronics}, 2024.

\bibitem{zou2023circumnavigation}
Y.~Zou, L.~Zhong, B.~Zhu, L.~Sun, and W.~He, ``Circumnavigation control of
  non-holonomic vehicle system with distance-rate measurements,'' in \emph{2023
  62nd IEEE Conference on Decision and Control (CDC)}.\hskip 1em plus 0.5em
  minus 0.4em\relax IEEE, 2023, pp. 5747--5752.

\bibitem{liu2023moving}
F.~Liu, C.~Guo, W.~Meng, R.~Su, and H.~Li, ``Moving-target circumnavigation
  using adaptive neural anti-synchronization control via distance-only
  measurements,'' \emph{IEEE Transactions on Cybernetics}, 2023.

\bibitem{wang2021mobile}
J.~Wang, B.~Ma, and K.~Yan, ``Mobile robot circumnavigating an unknown target
  using only range rate measurement,'' \emph{IEEE Transactions on Circuits and
  Systems II: Express Briefs}, vol.~69, no.~2, pp. 509--513, 2021.

\bibitem{frew2008coordinated}
E.~W. Frew, D.~A. Lawrence, and S.~Morris, ``Coordinated standoff tracking of
  moving targets using {Lyapunov} guidance vector fields,'' \emph{Journal of
  guidance, control, and dynamics}, vol.~31, no.~2, pp. 290--306, 2008.

\bibitem{de2017circular}
H.~G. De~Marina, Z.~Sun, M.~Bronz, and G.~Hattenberger, ``Circular formation
  control of fixed-wing {UAV} with constant speeds,'' in \emph{2017 IEEE/RSJ
  International Conference on Intelligent Robots and Systems (IROS)}.\hskip 1em
  plus 0.5em minus 0.4em\relax IEEE, 2017, pp. 5298--5303.

\bibitem{nakai2013vector}
K.~Nakai and K.~Uchiyama, ``Vector fields for {UAV} guidance using potential
  function method for formation flight,'' in \emph{AIAA Guidance, Navigation,
  and Control (GNC) Conference}, 2013, p. 4626.

\bibitem{pimenta2013decentralized}
L.~C. Pimenta, G.~A. Pereira, M.~M. Gon{\c{c}}alves, N.~Michael, M.~Turpin, and
  V.~Kumar, ``Decentralized controllers for perimeter surveillance with teams
  of aerial robots,'' \emph{Advanced Robotics}, vol.~27, no.~9, pp. 697--709,
  2013.

\bibitem{9969449}
W.~Yao, H.~G. de~Marina, Z.~Sun, and M.~Cao, ``Guiding vector fields for the
  distributed motion coordination of mobile robots,'' \emph{IEEE Transactions
  on Robotics}, vol.~39, no.~2, pp. 1119--1135, 2023.

\bibitem{mesbahi2010graph}
M.~Mesbahi, ``Graph theoretic methods in multiagent networks,'' vol.~33, 2010.

\bibitem{zhong2019circumnavigation}
H.~Zhong, Y.~Wang, Z.~Miao, J.~Tan, L.~Li, H.~Zhang, and R.~Fierro,
  ``Circumnavigation of a moving target in {3D} by multi-agent systems with
  collision avoidance: an orthogonal vector fields-based approach,''
  \emph{International Journal of Control, Automation and Systems}, vol.~17,
  no.~1, pp. 212--224, 2019.

\bibitem{wilhelm2019circumnavigation}
J.~Wilhelm, G.~Clem, D.~Casbeer, and A.~Gerlach, ``Circumnavigation and
  obstacle avoidance guidance for uavs using gradient vector fields,'' in
  \emph{AIAA Scitech 2019 Forum}, 2019, p. 1791.

\bibitem{muslimov2022application}
T.~Muslimov, ``Application of genetic algorithm for vector field guidance
  optimization in a uav collective circumnavigation scenario,'' in
  \emph{Climbing and Walking Robots Conference}.\hskip 1em plus 0.5em minus
  0.4em\relax Springer, 2022, pp. 325--333.

\bibitem{yao2021singularity}
W.~Yao, H.~G. de~Marina, B.~Lin, and M.~Cao, ``Singularity-free guiding vector
  field for robot navigation,'' \emph{IEEE Transactions on Robotics}, vol.~37,
  no.~4, pp. 1206--1221, 2021.

\bibitem{ren2008distributed}
W.~Ren and R.~W. Beard, \emph{Distributed consensus in multi-vehicle
  cooperative control}.\hskip 1em plus 0.5em minus 0.4em\relax Springer, 2008,
  vol.~27, no.~2.

\bibitem{yang18}
Y.~Lu, S.~Luo, P.~Zhu, W.~Yao, H.~G.~d. Marina, Z.~Xinglong, and X.~Xin, ``Safe
  distributed maneuvering with generalized formations using guiding vector
  fields,'' \emph{in preparation}, 2025.

\bibitem{wang2017safety}
L.~Wang, A.~D. Ames, and M.~Egerstedt, ``Safety barrier certificates for
  collisions-free multirobot systems,'' \emph{IEEE Transactions on Robotics},
  vol.~33, no.~3, pp. 661--674, 2017.

\bibitem{yao2022guiding}
W.~Yao, B.~Lin, B.~D. Anderson, and M.~Cao, ``Guiding vector fields for
  following occluded paths,'' \emph{IEEE Transactions on Automatic Control},
  vol.~67, no.~8, pp. 4091--4106, 2022.

\bibitem{zhu2024dvrpmhsidynamicvisualizationresearch}
\BIBentryALTinterwordspacing
P.~Zhu, Z.~Zeng, W.~Yao, W.~Dai, H.~Lu, and Z.~Zhou, ``{DVRP-MHSI}: Dynamic
  visualization research platform for multimodal human-swarm interaction,''
  2024. [Online]. Available: \url{https://arxiv.org/abs/2408.10861}
\BIBentrySTDinterwordspacing

\end{thebibliography}
\end{document}